\newcommand{\green}[1]{{\normalsize{{\color{ForestGreen}#1}}}}
\theoremstyle{definition}
\newtheorem{definition}{Definition}
\newtheorem{theorem}{Theorem}
\newtheorem{problem}{Problem}
\newtheorem{remark}{Remark}
\title{ \LARGE \bf
Feasible Space Monitoring for Multiple Control Barrier Functions with application to Large Scale Indoor Navigation}
\author{Hardik Parwana$^{1}$, Mitchell Black$^{2}$, Bardh Hoxha$^{2}$, Hideki Okamoto$^{2}$, Georgios Fainekos$^{2}$, \\Danil Prokhorov$^{2}$, Dimitra Panagou$^{1,3}$ 
    \thanks{This work was mostly performed while Hardik Parwana was at Toyota. 
    }
    \thanks{$^{1}$ Department of Robotics, $^{3}$ Department of Aerospace Engineering, University of Michigan, Ann Arbor, USA {\tt\small \{hardiksp, dpanagou\}@umich.edu}
    }
 \thanks{$^{2}$ Toyota Motor North America, Research \& Development, Ann Arbor, MI 48105, USA
 {\tt\small {<first\_name.last\_name> }@toyota.com.}
 }
}
\newcommand{\reals}{\mathbb{R}}
\newcommand{\s}{\mathcal{S}}
\newcommand{\X}{\mathcal{X}}
\newcommand{\U}{\mathcal{U}}
\newcommand{\K}{\mathcal{K}}
\newcommand{\classK}{class-$\K$ }
\newcommand{\Int}{\text{Int} }
\newcommand{\eqn}[1]{\begin{align}#1\end{align}}
\begin{document}

\maketitle
\thispagestyle{empty}
\pagestyle{empty}

\begin{abstract}
    Quadratic programs (QP) subject to multiple time-dependent control barrier function (CBF) based constraints have been used to design safety-critical controllers. However, ensuring the existence of a solution at all times to the QP subject to multiple CBF constraints (hereby called compatibility) is non-trivial. We quantify the feasible control input space defined by multiple CBFs at a state in terms of its volume. We then introduce a novel feasible space (FS) CBF that prevents this volume from going to zero. FS-CBF 
    is shown to be a sufficient condition for the compatibility of multiple CBFs. For high-dimensional systems though, finding a valid FS-CBF may be difficult due to the limitations of existing computational hardware or theoretical approaches. 
    In such cases, we show empirically that imposing the feasible space volume as a candidate FS-CBF not only enhances feasibility but also exhibits reduced sensitivity to changes in the user-chosen parameters such as gains of the nominal controller. Finally, paired with a global planner, we evaluate our controller for navigation among other dynamically moving agents in the AWS Hospital gazebo environment. The proposed controller is demonstrated to outperform the standard CBF-QP controller in maintaining feasibility.
    
\end{abstract}

\section{INTRODUCTION}
\label{section::introduction}
Designing a safety-critical robot motion stack for ensuring the satisfaction of multiple, possibly time-dependent, state and input constraints is an active area of research. This work deals with the formulation of low-level controllers for control-affine dynamical systems based on control barrier functions (CBFs). Given a constraint set, a CBF for given robot dynamics can be designed such that imposing a restriction on its rate-of-change in the control design guarantees constraint satisfaction for all times. This condition is typically imposed on the control input in an optimization problem such as a quadratic program (QP). In the presence of multiple constraint sets, it is common to simultaneously impose CBFs designed independently for each constraint in the controller. 
However, in such implementations, no guarantees for existence of a control satisfying all the CBF constraints exist, and violation of safety may result.

In this work, we introduce a CBF that enforces the existence of a common solution to multiple CBF constraints at all times, a property we call persistent compatibility.
Towards this, we introduce the following new metric. 
At a given state and time, we quantify the feasible control input space defined by multiple CBF constraints and input bounds in terms of its volume. For example, for control affine dynamics and polytopic input constraints, the feasible control space is also given by a convex polytope. 
Next, we introduce a novel feasible-space control barrier function (FS-CBF) that prevents this volume from going to zero. Our controller is thus a QP that augments the user-provided CBF constraints with an additional FS-CBF constraint.
The existence of FS-CBF is shown to be a sufficient condition for enforcing the persistent compatibility of multiple CBFs. 

Two commonly used approaches employed in the literature for handling multiple CBF constraints are:
combining all barrier functions into a single one \cite{panagou2013multi, glotfelter2017nonsmooth, stipanovic2012monotone} or imposing the barriers simultaneously \cite{usevitch2020strong,aali2022multiple,wang2022ensuring}. 
In \cite{panagou2013multi, stipanovic2012monotone}, barrier functions are combined into a single barrier constraint through a smoothed minimum (or maximum) operator. Employing non-smooth analysis, \cite{glotfelter2017nonsmooth} constructs a barrier function with the non-smooth minimum (or maximum) operation. All of the aforementioned approaches assume compatibility of CBFs when imposed together \cite{usevitch2020strong, aali2022multiple} or assume that the newly constructed CBF is a valid CBF for dynamics of the system and input bounds.


A few studies explicitly try to circumvent the persistent compatibility issue. A sampling-and-grid refinement method is proposed in \cite{tan2022compatibility} to search the domain of state space over which multiple control barrier functions are compatible. However, it does not propose a controller to confine the robot to this domain. 
\begin{figure*}[t]
    \centering
    \includegraphics[width=0.8\linewidth]{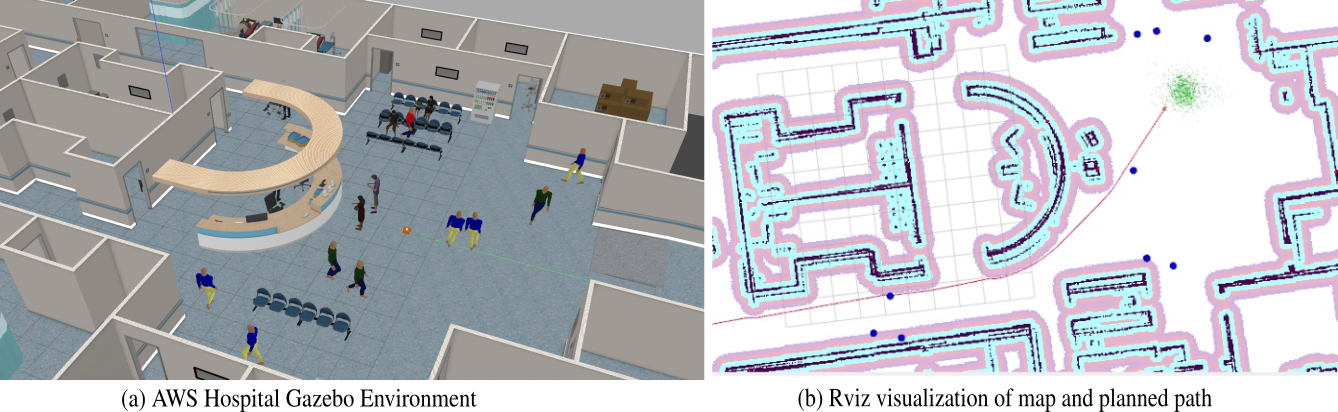}
    \captionof{figure}{\small{Indoor navigation scenario (a) Robot (green circle) navigating in AWS Hospital Gazebo environment using proposed CBF controller. (b) Rviz visualization of the global map, global planner's path (green), humans (blue) and the nearest static obstacles (red, found in the way described in Section \ref{section::case_study_3}) used by the robot for collision avoidance.}}
    \label{fig::experiment_scenario}
\end{figure*}
A sum-of-squares (SoS)-based CBF synthesis method when constraint sets are described by polynomial functions is given in  \cite{clark2021verification}. In
\cite{breeden2023compositions}, sufficient conditions are provided for designing multiple CBFs concurrently so that a QP-based controller using all the CBFs will always be feasible. The work by \cite{black2023adaptation} consolidates multiple CBFs into one CBF and proposes a predictor-corrector adaptation law for weights of constituent constraint functions. 
An offline-sampling procedure is proposed in \cite{lee2024data} to construct a CBF from constraint functions when the former is related to the latter through a constant scaling and offset of state vector. Additionally, a CBF learning framework is proposed in \cite{qin2021learning} for decentralized but cooperative multi-agent systems. Some works allow \classK functions to adapt by having time-varying parameters. The work by \cite{xiao2020feasibility} creates a dataset of persistently compatible and eventually incompatible parameters by randomly sampling parameters and simulating the system forward. Adaptation of \classK parameters is also performed by posing CBF-QP as a differentiable layer in a predictor-corrector framework \cite{parwana2022recursive} and in learning paradigm \cite{xiao2023barriernet} for learning of safe policies. 

We introduce a procedure that exploits existing approaches such as reachability analysis \cite{tonkens2022refining} to synthesize a valid FS-CBF.
Owing to the limitations of these approaches though, it might not always be computationally feasible to find FS-CBF. In such cases, we show empirically that
that imposing feasible space volume as a candidate FS-CBF enhances the persistent compatibility of standard CBF formulations. It is also observed to reduce the sensitivity of CBF-QP controllers to user-chosen parameters such as gains of the nominal controller. 
Furthermore, finding the volume of feasible space and its gradient with respect to constituent hyperplanes are non-trivial problems as they do not accept analytical expressions in general. As a contribution, we also introduce some of the existing approximations suitable for control-oriented applications.
To the best of our knowledge, we are the first to present this paradigm of quantifying the feasible control input space of CBFs as well as consider an optimization problem's feasible solution space volume and its gradient in control design. 
Previous approaches simply check if the CBF-QP is feasible, which amounts to checking whether the volume is non-zero. 
Note that instead of synthesizing FS-CBF, it may be possible to consolidate multiple CBFs into a single valid CBF offline. 
However, such an ideal scenario is difficult, if not impossible, to achieve due to prohibitive computational requirements of existing approaches. 
It may also be difficult to account for all possible possibilities in the offline phase. 
For example, such scenarios may have varying number of humans and obstacles. Our motivation lies in ensuring safety in highly constrained indoor environments, such as in Fig. \ref{fig::experiment_scenario} with static as well as dynamically moving agents like humans. 
Our idea to monitor feasible space volume thus has a high practical appeal for robotic applications.


Finally, as a case study, paired with ROS2 navigation stack planners, we also provide the 
application of CBF-QP controller on a large scale indoor navigation scenario by implementing our controller on AWS Hospital environment in Gazebo \cite{awsHospital}. 
Several existing approaches such as \cite{silva2023online,patompak2016mobile,cai2023sampling} for indoor navigation with humans rely only on fast replanning through a low-level planner based on simplified dynamics for collision avoidance, and do not employ a safety-critical controller.
Our controller may thus complement existing planning frameworks. 
Among CBF-based approaches, \cite{vulcano2022safe} uses CBF-MPC, an MPC controller with constraints defined by CBF derivative conditions, for navigation in crowd, and \cite{thirugnanam2022safety} designs a novel CBF for polytopic obstacles and uses it in the CBF-MPC framework for navigation in corridor environments. From an application perspective, \cite{vulcano2022safe} does not consider indoor environments and \cite{thirugnanam2022safety} does not consider dynamic moving agents. While MPC does help with maintaining feasibility of CBF constraints, it is unlike our method of monitoring the volume of feasible space.

\subsubsection*{Summary of Contributions}
\begin{itemize}
\item Introduced the notion of persistent compatibility and formulated Feasible-Space Control Barrier Function (FS-CBF) to quantify and restrict the rate-of-change of the volume of the feasible control input space.
\item Demonstrated through simulations that FS-CBF enhances persistent feasibility and reduces hyperparameter sensitivity in CBF-QP controllers. We also demonstrate our controller in a large-scale indoor navigation scenario, integrated with ROS2 navigation stack planners.
\end{itemize}

In the subsequent, we formulate our problem in Section \ref{section::problem_formulation}, introduce feasible space, methods to compute its volume, and design of FS-CBF in Section \ref{section::fs_cbf}. Finally, we present simulation results in Section \ref{section::simulation_results}.
\section{Notation}
The set of real numbers is denoted as $\reals$. The interior and boundary of a set $\mathcal{C}$ is denoted by $\textrm{Int}(\mathcal{C})$ and $\partial \mathcal{C}$. The empty set is denoted by $\emptyset$. For $a\in \reals^+$, a continuous function $\alpha:[0,a)\rightarrow[0,\infty)$ is a \classK function if it is strictly increasing and $\alpha(0)=0$. For $x\in \reals, y\in \reals^n$, $|x|$ denotes the absolute value of $x$ and $||y||$ denotes the $L_2$ norm of $y$. The time derivative of $x$ is denoted by $\dot x$ and $\frac{\partial F}{\partial x}$ denotes the gradient of a function $F: \reals^n \rightarrow \reals$ with respect to $x\in \reals^n$.

\section{System Description}
\label{section::system_description}
Consider a system with state $x\in \X \subset \reals^{n}$, control input $u \in \U \subset \reals^{m}$ with the following dynamics
\eqn{
\dot{x} = f(x) + g(x)u,
\label{eq::dynamics_general}
}
where $f:\X\rightarrow \reals^{n}$ and $g:\X\rightarrow \reals^{n \times m}$ are locally Lipschitz continuous functions and $x(t)$ is the solution of \eqref{eq::dynamics_general} at time $t$ from initial condition $x(0)=x_0$. The set of allowable states $\s(t)$, hereby called safe set, at time $t$ is specified as an intersection of $N$ sets $\mathcal S_i(t),i\in\{1,2,..,N\}$. Each $\mathcal S_i(t)$ is defined as the 0-superlevel set of a continuously differentiable function $h_i:\reals^+ \times \mathcal{X} \rightarrow \reals$, i.e.,
\begin{subequations}
    \begin{align}
        \s_i(t) & \triangleq \{ x \in \X : h_i(t,x) \geq 0 \}, \label{eq::safeset1} \\
        \partial \s_i(t) & \triangleq \{ x\in \X: h_i(t,x)=0 \}, \label{eq::safeset2}\\
        \Int (\s_i)(t) & \triangleq \{ x \in \X: h_i(t,x)>0  \}. \label{eq::safeset3}
    \end{align}
    \label{eq::safeset}
\end{subequations}
When the relative degree $r_i$ of $h_i$ is greater than one, we can derive the following new barrier functions
\eqn{
\psi^{k}_i(t,x) = \dot \psi^{k-1}_i(t,x) + \alpha_i^k \psi_i^k(t,x)
}
for $k=\{1,..,r_i-1\}$, $\psi_i^0=h_i$ and where $\alpha_i^k$ are extended \classK functions.

\begin{definition}(Higher Order CBF)\cite{xiao2019control,xiao2021high}
\label{definition::hocbf_definition}
    The function $h_i(t,x):\reals^+\times \reals^{n}\rightarrow \reals$ is a Higher-Order CBF (HOCBF) of $r_i$-th order on the set $\s_i(t)$ if there exist $r_i$ extended $\mathcal{K}$ functions $\alpha_i^k:\reals \rightarrow \reals, \;  k\in\{1,2,..,r_i\}$, and an open set $D_i(t)\subset \reals^+ \times \reals^n$ with $\s_i(t) \subset D_i(t) \subset \mathcal{X}$ such that
     \eqn{
      \dot \psi_i^{r_i-1}(t,x,u) \geq -\alpha_i^{r_i} (\psi_i^{r_i-1}(t,x)), ~ \forall x \in D_i(t), \forall t\geq 0.
      \label{eq::cbf_derivative}
    }
\end{definition}

Note that for $r_i=1$, \eqref{eq::cbf_derivative} reduces to $\dot h_i (x,u)\geq -\alpha_i h_i(x)$. And for $r_i>1$, $\dot \psi_i^{k}$ does not depend on $u$ for $k<r_i-1$. It is assumed in this work that each $h_i$ is a HOCBF for the set $\mathcal{S}_i$ and that there exists a Lipschitz continuous controller $u:\mathbb R^+\times\mathcal S_i\rightarrow \mathcal U$ that satisfies \eqref{eq::cbf_derivative}, guaranteeing thus the forward invariance of the set $\mathcal{S}_i$ under the closed-loop dynamics \eqref{eq::dynamics_general} \cite{ames2016control,lindemann2018control}.
Finally, we review the following controller that is often used to enforce multiple CBF constraints
\begin{align}
\label{eq::qp_controller}
u(t,x) = \arg \min_{\green{u\in \mathcal
U}} \quad &\! ||u - u_{ref}(t,x)||^2  \\
          \textrm{s.t.} \quad & \dot \psi_i^{r_i-1}(t,x,u) \geq \alpha_i^{r_i}(\psi_i^{r_i-1}(t,x)) \nonumber \\
          & \quad \quad \quad \quad \quad \quad \quad \quad i\in \{1,..,N\} \nonumber
\end{align}
where $u_{ref}:\mathbb R^+\times\mathcal X\rightarrow \mathcal U$ is a reference control policy.

\section{Problem Formulation}
\label{section::problem_formulation}
Given a barrier function $h_i, i\in \{1,..,N\}$, the set of control inputs that satisfy the CBF derivative constraint \eqref{eq::qp_controller} for a given state $x$ at time $t$ are given by
\eqn{
  \mathcal{U}_i(t,x) = \{ u \in \reals^m ~|~ \dot \psi_i^{r_i-1}(t,x,u) + \alpha_i^{r_i}(\psi_i^{r_i-1}(t,x)) \geq 0\}.
  \label{eq::feasible_cbf1}
}
\begin{definition}
    \textbf{Feasible CBF Space}: At a given time $t$ and state $x\in \mathcal{X}$, the feasible CBF space $\mathcal{U}_c \subset \reals^m$ is the intersection of sets $\mathcal{U}_i, i\in\{1,..,N\}$ with control input domain
    \eqn{
    \mathcal{U}_c(t,x) = \bigcap\limits_{i=1}^{N} ~ \mathcal{U}_i(t,x) \bigcap \mathcal{U}.
    \label{eq::cbf_control_intersection}
}
\end{definition}
For control-affine dynamics, the constraints \eqref{eq::feasible_cbf1} are affine in $u$ and represent halfspaces. 
Note that for control-affine dynamics, \eqref{eq::cbf_derivative} and \eqref{eq::feasible_cbf1} are affine in the control input $u$, and therefore \eqref{eq::qp_controller} is a QP controller when $\mathcal{U}=\reals^m$. 
If the input domain is also polytopic, that is, $\mathcal{U}=\{u \in \reals^m ~|~ A_u u \leq b_u\}, A_u\in \reals^{2m\times m}, b\in \reals^{2m}$, then the feasible space $\mathcal U_c$ is a convex polytope, and the controller \eqref{eq::qp_controller} is still a QP, since only linear constraints have been added.
Figure~\ref{fig::si_fs} visualizes the polytope for a single integrator-modeled robot as it navigates around obstacles. For brevity, we will be denoting the feasible space in the following form:
\eqn{
\mathcal{U}_c(t,x) = \{ u\in \reals^m | A(t,x) u \leq b(t,x)\}
\label{eq::cvx_polytope_eqn}
}
where $A: \reals^+ \times \reals^n \rightarrow \reals^{(N+2m)\times m}, b:  \reals^+ \times \reals^n \rightarrow \reals^{(N+2m)\times 1}$. We also denote each row of $A$ by $a_i$ and each element of $b$ by $b_i, i\in \{1,2,..,N\}$. Next, we formally introduce the notion of compatibility and state the objective of this paper. 
\begin{figure}
    \centering
    \includegraphics[width=0.45\textwidth]{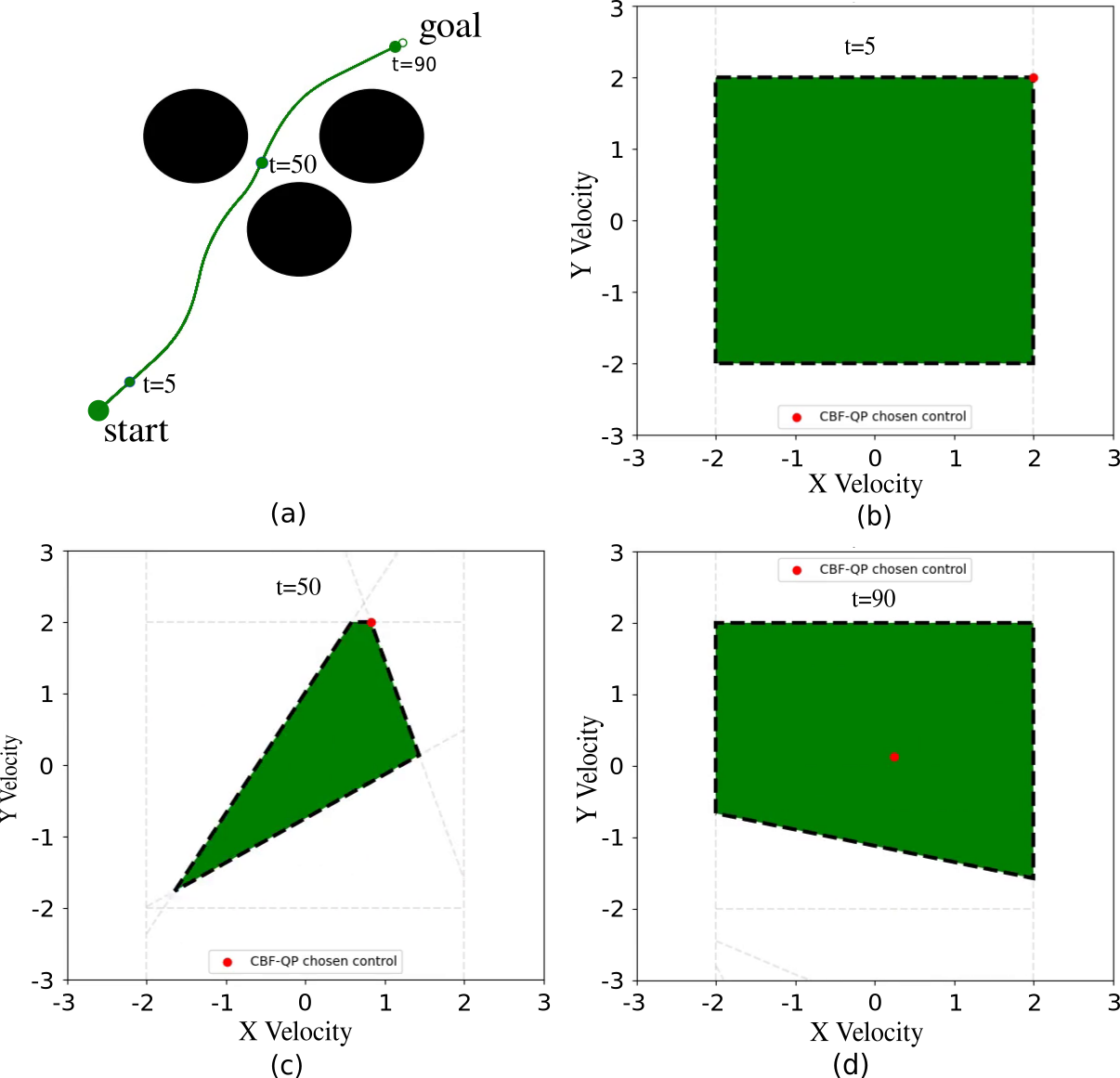}
    \caption{\small{Feasible space (green) of a single integrator agent with X, Y velocity inputs (in m/s) as it navigates to its goal location with N=3 in \eqref{eq::qp_controller} while avoiding obstacles (black). (a) The path in configuration space, (b), (c), (d) feasible control space visualization at different times. The dotted lines represent hyperplane equations that constrain the control input. The $2\times 2$ square is formed by control input bound. Other hyperplanes are formed as a result of CBF equations.}}
    \label{fig::si_fs}
\end{figure}

\begin{definition}
    \textbf{(Compatible CBFs)} At time $t$ and state $x$, the barriers $h_i, i\in \{1,..,N\}$ are called compatible if $\mathcal{U}_c(t,x)\neq \emptyset$. Furthermore, given a Lipschitz continuous controller $\pi : \mathcal{X} \rightarrow \mathcal{U}$, the barriers $h_i$ are called persistently compatible for dynamics
    \eqref{eq::dynamics_general} with $u = \pi(x)$ in \eqref{eq::dynamics_general} if $\exists E\subset \mathcal{X} $ such that $x_0\in E \implies \mathcal{U}_c(t,x)\neq \emptyset$ for all $t\geq 0$.
    \label{definition::persistently_feasible}
\end{definition}

\begin{problem}
    Consider the dynamical system \eqref{eq::dynamics_general} subject to $N$ barrier function constraints $h_i(t,x)\geq 0$ where each $h_i$, defined in \eqref{eq::safeset}, is a CBF with \classK function $\alpha_i$ for the set $\mathcal{S}_i$ respectively. If for the initial state $x_0\in \reals^n$, $h_i(t,x_0)>0, i\in\{1,..,N\},~ \mathcal{U}_c(0,x_0)\neq \emptyset$, design a controller such that $\forall t>0,~ h_i(t,x(t))>0$ and $\mathcal{U}_c(t,x)>0 $.
    \label{objective::1}
\end{problem}

We propose a solution to Problem \ref{objective::1} in Section \ref{section::FS-CBF-theory}. However, owing to limitations of related literature, our solution may be difficult to realize in practice. We therefore propose several approximations in Sections \ref{section::candidate-FS-CBF} and \ref{section::volume_estimation} and provide empirical validation of their efficacy in Sections \ref{section::case_study_2} and \ref{section::case_study_3}.

\section{Feasible Volume Control Barrier Function}
\label{section::fs_cbf}
The infeasibility of CBF-QP results when $\mathcal{U}(t,x)$ becomes empty. In this section, we propose a novel control barrier function built upon the feasible space as a means to solving Problem \ref{objective::1}. 

\subsection{Polytope Volume Barrier Function}
\label{section::FS-CBF-theory}
A measure of the feasible space is its volume. Let $\mathcal V(t,x) = \textrm{vol}(\mathcal U_c)(t,x)$ where $\textrm{vol}$ is the volume function that, for a polytope $P=\{u | Au\leq b\}$, is defined as follows
\eqn{
\textup{vol}(P) = \int\limits_{u\in P}d\mathcal{V}, ~~ d\mathcal{V}=\prod_{j=1}^m du_j
\label{eq::polytope_volume}
} 
where $du_j$ is an infinitesimal change in $u_j$. 
Note that $\mathcal{V}(t,x)=0$ when $\mathcal{U}_c(t,x)=\emptyset$. The compatible state space is defined as the set of states $D(t)=\{x\in \mathcal{X} ~|~ \mathcal V(t,x)\geq 0\}$ and is visualized in Fig.~\ref{fig::mainfig}. 
We then aim to find a CBF $\mathcal{V}_c: \reals^+ \times \reals^n\rightarrow \reals$, with \classK function $\alpha_{\mathcal V_c}$,  that renders $D(t)$ forward invariant by imposing the following condition
\eqn{
    \dot{\mathcal{V}_c}(t,x,u) \geq -\alpha_{\mathcal{V}_c}(\mathcal{V}_c(t,x))
    \label{eq::fs_cbf_condition}
}


We now present the following theorem that states the sufficiency of FS-CBF condition \eqref{eq::fs_cbf_condition} for compatibility of $h_i$.

\begin{theorem}
    Let $h_i, i\in \{1,..,N\}$ be $N$ barrier functions corresponding to $N$ safe sets defined as in \eqref{eq::safeset}. Let $\mathcal{V}(t,x)$ be the volume of the feasible CBF space $\mathcal{U}_c(t,x)$ in the control domain defined in \eqref{eq::cbf_control_intersection}. For $0<\epsilon<<1$, let $D(t)=\{x \in \mathcal{X}~ | ~\mathcal{V}(t,x)\geq \epsilon\}$ be the compatible state space of $h_i, i\in \{1,..,N\}$. Then $h_i$ are persistently compatible if there exists $\mathcal{C}\subset \mathcal{D}$ and a function $\mathcal{V}_c: \reals^+ \times \reals^n\rightarrow \reals$ such that $\mathcal{V}_c$ is a control barrier function on $\reals^+ \times \mathcal{C}$ and $x(0)\in \mathcal{C}$. 
    \label{theorem::fs-cbf}
\end{theorem}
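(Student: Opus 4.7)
The plan is to reduce the statement to a direct invocation of the standard CBF forward-invariance result applied to the auxiliary barrier $\mathcal{V}_c$, followed by bookkeeping using the inclusion $\mathcal{C} \subset \mathcal{D}$ and the definition of $\mathcal{D}$. Intuitively, $\mathcal{V}_c$ serves as a ``gatekeeper'' whose forward-invariant safe set is a subset of states at which the original $h_i$-polytope has positive volume; once we stay in that safe set we stay where $\mathcal{U}_c$ is non-empty, which is exactly compatibility.

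First, I would invoke the classical CBF theorem (Ames et al. 2016, with the time-varying extension in Lindemann--Dimarogonas 2018) to conclude that, since $\mathcal{V}_c$ is a valid CBF on $\mathbb{R}^+ \times \mathcal{C}$ with class-$\mathcal{K}$ function $\alpha_{\mathcal{V}_c}$, the condition \eqref{eq::fs_cbf_condition} admits a Lipschitz-continuous selection $\pi : \mathbb{R}^+ \times \mathcal{C} \to \mathcal{U}$, and the closed-loop dynamics \eqref{eq::dynamics_general} with $u = \pi(t,x)$ renders $\mathcal{C}$ (understood as the $0$-superlevel set of $\mathcal{V}_c$ within its domain) forward invariant. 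This is exactly where the hypothesis ``$\mathcal{V}_c$ is a control barrier function'' is used.

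Second, since $x(0) \in \mathcal{C}$ and $\mathcal{C}$ is forward invariant under $\pi$, the resulting trajectory satisfies $x(t) \in \mathcal{C}$ for all $t \geq 0$. Combining this with $\mathcal{C} \subset \mathcal{D}(t)$ yields $x(t) \in \mathcal{D}(t)$, and by the definition $\mathcal{D}(t) = \{x \in \mathcal{X} : \mathcal{V}(t,x) \geq \epsilon\}$ we obtain $\mathcal{V}(t, x(t)) \geq \epsilon > 0$ for all $t$. Because $\mathcal{V}(t,x)$ is, by construction \eqref{eq::polytope_volume}, the Lebesgue volume of the polytope $\mathcal{U}_c(t,x)$ defined in \eqref{eq::cvx_polytope_eqn}, strict positivity of $\mathcal{V}$ forces $\mathcal{U}_c(t, x(t))$ to be non-empty (indeed, to possess non-empty interior in $\mathbb{R}^m$). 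By the definition of compatibility, the barriers $\{h_i\}_{i=1}^N$ are therefore compatible at every $(t, x(t))$; setting $E = \mathcal{C}$ in the definition of persistent compatibility finishes the argument.

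The step I expect to be the main obstacle is the first one: extracting a Lipschitz feedback $\pi$ that simultaneously satisfies the $\mathcal{V}_c$ constraint and the input bound $u \in \mathcal{U}$ from the bare assumption that $\mathcal{V}_c$ is a CBF. Standard Michael-style selection arguments go through when the Lie derivative $L_g \mathcal{V}_c$ is non-degenerate and $\mathcal{V}_c$ is continuously differentiable, but when $\mathcal{V}_c$ is itself constructed from the polytope-volume function $\mathcal V$ its regularity is delicate, since the face structure of $\mathcal{U}_c$ can change discontinuously with $x$. Some smoothness/regularity hypothesis on $\mathcal{V}_c$ is therefore implicitly required for the classical CBF theorem to apply; beyond this, steps two and three are routine bookkeeping.
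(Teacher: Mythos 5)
Your proof is correct and follows essentially the same route as the paper: invoke forward invariance of $\mathcal{C}$ under a controller satisfying the $\mathcal{V}_c$-CBF condition, then use $\mathcal{C}\subset\mathcal{D}$ to conclude $\mathcal{V}(t,x(t))\geq\epsilon>0$ and hence non-emptiness of $\mathcal{U}_c$, i.e., persistent compatibility with $E=\mathcal{C}$. Your closing caveat about the regularity needed to extract a Lipschitz selection is a fair observation that the paper leaves implicit (and partly defers to its Remark~\ref{remark::continuity}), but it does not change the argument.
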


\begin{proof}
    Suppose that $\mathcal{V}_c$ is a CBF with \classK function $\alpha_{\mathcal V}$ on $\reals^+ \times \mathcal{C}$, with $\mathcal{C}\subset \mathcal{D}$. 
    If $x(0)\in \mathcal{C}$, then $x(t)\in \mathcal{C}, \forall t>0$ under the action of the controller that satisfies \eqref{eq::fs_cbf_condition}. Therefore, $\mathcal{V}_c(t,x(t))\geq 0, \forall t> 0$ and hence $\mathcal{V}(t,x(t))\geq \epsilon$ for all $t>0$, implying persistent compatibility.
    \vspace{-1mm}
\end{proof}

Note that sets of measure 0 have zero volume. Therefore, in Theorem \ref{section::fs_cbf}, we enforce $\mathcal{V}\geq \epsilon$ to distinguish between the case when  $\mathcal{U}_c=\emptyset$ from the case when $\mathcal{U}_c$ is a set of measure 0 (for example, a line in 3-dimensional space) as the latter does not exclude compatibility. Also, note that \eqref{eq::polytope_volume} implies that when $\mathcal{U}_c=\emptyset$, $\mathcal U{(t,x)}=0$ and therefore $\mathcal{V}$ is well-defined in the entire state space.

Computing $\mathcal{V}(t,x)$ requires computing the volume of the feasible space $\mathcal{U}_c$. This is not a trivial problem and even when feasible space is shaped as a convex polytope as in \eqref{eq::cvx_polytope_eqn}, it does not admit analytical expressions for $\mathcal{V}$. In the next section, we introduce some existing approximate approaches that are suitable for our application. 

\begin{figure*}[!htb]
    \begin{minipage}{0.73\textwidth}
     \centering
    \includegraphics[width=1.0\linewidth]{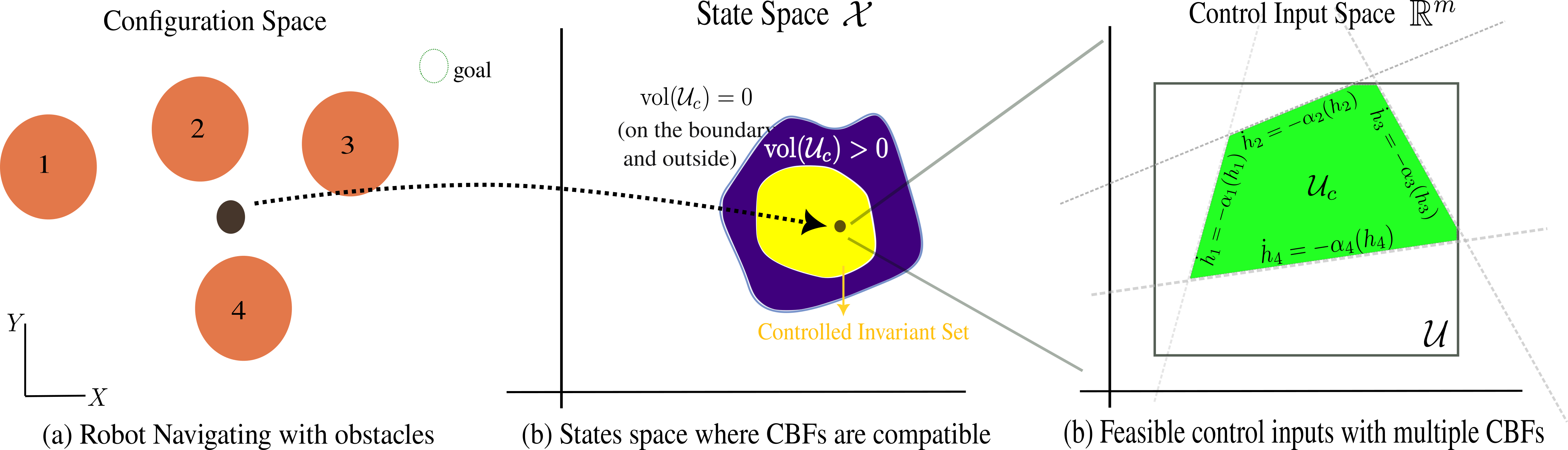}
    \captionof{figure}{\small{(a) Workspace of black robot with illustrative orange obstacles, (b) The state space. In the violet region, CBFs are compatible, i.e., the volume of the polytope $\mathcal{U}_c$ is $>0$. The yellow region is the largest control invariant set contained in the violet region. (c) The feasible control space $\mathcal{U}_c$ defined by multiple CBFs at the current state and time. It's volume is given by the area of the green region in this 2D example.
    }}
    \label{fig::mainfig}
\end{minipage}
\begin{minipage}{0.23\textwidth}
    \includegraphics[width=0.7\linewidth]{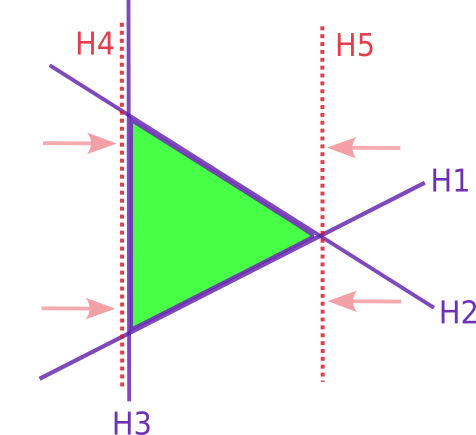}
    \captionof{figure}{\small{Feasible space (green) defined by hyperplanes H1-H5. H3, H4 are overlapping. Suppose H4, H5 are free to move horizontally. It can be shown that the gradient of the area of the green region exists w.r.t position of H5 but not w.r.t H4. \quad\quad\quad\quad \quad\quad\quad\quad\quad \quad\quad\quad\quad\quad 
    }}
    \label{fig::fs_example_classK}
\end{minipage}
\vspace{-2mm}
\end{figure*}



Contrary to intuition, only imposing \eqref{eq::fs_cbf_condition} in a controller does not ensure that the chosen control input lies inside the feasible space. Since instantaneously violating a CBF constraint does not lead to an immediate violation of safety, choosing a control input outside $\mathcal{U}_c(t,x)$ may help in increasing $\mathcal{V}_c$ and therefore satisfies the condition \eqref{eq::fs_cbf_condition}. $\mathcal{V}_c$ is strictly used to enforce compatibility of $h_i$. To ensure that chosen control input maintains compatibility of $h_i$ as well as lies inside the feasible space $\mathcal{U}_c$, we design the following controller
\begin{subequations}
    \eqn{
    \pi (t,x)\!=\! \arg\min_{u\in\mathcal{U}} \quad & ||u - u_{ref}(t,x)||^2 + M \delta^2\\
      \textrm{s.t.} \quad 
      & \dot{\mathcal{V}_c}(t,x,u) \!\geq -\alpha_{\mathcal{V}_c}(\mathcal{V}_c(t,x)) + \delta \label{eq::fs_cbf_condition_final_controller} \\
      & \psi_i^{r_i-1}(t,x,u) \geq \alpha_i^{r_i}(\psi_i^{r_i-1}(t,x)) \nonumber  \\
          & \quad \quad \quad \quad \quad \quad \quad \quad i\in \{1,..,N\} \label{eq::hocbf_condition_final_controller}
}
\label{eq::fs_final_cbf_controller}
\end{subequations}
where $\delta$ is a slack variable used to relax \eqref{eq::fs_cbf_condition_final_controller} and $M>>1$. The relaxation is necessary as maintaining compatibility of CBF constraints is not equivalent to ensuring safety. 
To see this, note that CBF constraints are mathematically well-defined even when the state is unsafe, that is, $x\notin S_i ~\forall i$, and act like a Lyapunov function to attract the state $x$ to the safe set. 
Therefore, since choosing a control input that satisfies \eqref{eq::hocbf_condition_final_controller} would, loosely speaking, ensure safety (see Remark \ref{remark::continuity}), and \eqref{eq::fs_cbf_condition_final_controller} does not have a one-to-one relationship with safety, \eqref{eq::fs_cbf_condition_final_controller} and \eqref{eq::hocbf_condition_final_controller} need not have a common solution. 
To prioritize safety, we enforce \eqref{eq::fs_cbf_condition} along with \eqref{eq::cbf_derivative} in \eqref{eq::fs_final_cbf_controller} but with a slack variable. In theory, therefore, because of $\delta$ \eqref{eq::fs_final_cbf_controller} does not possess guarantees of compatibility and could lead to infeasibility. 
Finding conditions under which compatibility and safety can be maintained simultaneously is subject to future work. 
This work, nonetheless provides a novel perspective on CBF-QP controllers and in Section \ref{section::simulation_results}, we focus on empirical showing improvements over standard CBF-QP \eqref{eq::qp_controller}.





\begin{remark}
\label{remark::continuity}
    Note that \eqref{eq::qp_controller} and  \eqref{eq::fs_final_cbf_controller} may not result in Lipschitz-continuous control inputs. To the best of our knowledge, obtaining a holistic solution that guarantees smoothness and uniqueness of solutions and ensures safety along with compatibility of constraints for \eqref{eq::dynamics_general} is an open problem. Further investigation of safety in presence of discontinuous control input, similar to \cite{usevitch2020strong}, is subject to future work.
\end{remark}

\begin{remark}
     We demonstrate in Section \ref{section::case_study_2} and Fig. \ref{fig::cs1_scenario} that imposing constraint \eqref{eq::fs_final_cbf_controller} results in $u$ being chosen in the interior of $\mathcal{U}_c$, even when the nominal control input is outside the feasible space. This is done to prevent the volume from shrinking too fast. The controller \eqref{eq::qp_controller} on the hand always chooses a point on the boundary of feasible space. 
     A mapping from $u_{ref}$ to the interior of feasible space was also designed in \cite{yang2022differentiable} through a gauge map. The gauge map was designed in a different way than ours and focused on obtaining a differentiable control design with single barrier constraint only. 
     \label{remark::gauge_map}
     \vspace{-2mm}
\end{remark}

\subsection{Polytope Volume as Candidate CBF}
\label{section::candidate-FS-CBF}

In Section \ref{section::case_study1}, we show an application of an existing CBF synthesis method to construct $\mathcal V_c$ for a simple system. For highly dynamic and constrained environments though, as previously mentioned, it is usually computationally prohibitive to consider all possible scenarios in the design of CBFs. Therefore, as is typically done in practice, we assume the constraint function is the CBF, that is, $\mathcal{V}_c=\mathcal{V}$ and use the condition \eqref{eq::fs_cbf_condition} to promote persistent compatibility. 
When imposing $\mathcal{V}_c=\mathcal{V}$, we impose linear \classK functions in our simulation and experiment results. We also relax the constraint \eqref{eq::fs_cbf_condition_final_controller} with a slack variable

\subsection{Polytope Volume Estimation}
\label{section::volume_estimation}
In this section, we describe several ways to approximately compute the volume of feasible space.

\subsubsection{Monte Carlo}
\label{section::mc_volume}
 A common approach to estimating polytope volume is to evaluate the integral with Monte Carlo (MC) sampling\cite{lee2018convergence,emiris2018practical}. We randomly sample $K$ particles $p_i\in \mathcal{U}$ and evaluate the volume as follows
\eqn{
   \mathcal{V}(t,x) = \frac{\sum_{j=1}^K \mathds{1}(p_i\in \mathcal{U}_c) }{\textrm{vol}(\mathcal{U})}
   \label{eq::mc_volume}
}
where $\textrm{vol}(\mathcal{U})$ is assumed to be given and easy to compute practically as it is often defined as a symmetric polygon.

\subsubsection{Chebyshev Ball of a Polytope}
The center $c$ and radius $r$ of the largest sphere contained inside a polytope, called Chebyshev ball, can be computed with the following Linear Program (LP)\cite[Chapter 8]{boyd2004convex} 
\begin{subequations}
\eqn{
\min_{c,r} \quad & -r \\
\textrm{s.t.} \quad & a_i c + ||a_i|| r \leq b_i,~  \forall i\in \{1,2,..,N\}
}
\label{eq::circle_volume}
\end{subequations}
where $a_i$ and $b_i$ are the $i^{th}$ rows of polytope matrices $A,b$.

\subsubsection{Inscribing Ellipsoid}
An ellipoid $E$ is an image of the unit ball under affine transformation $(B,d)$, $
 E = \{ B u + d ~ | ~ ||u||\leq 1 \} $ where $B\in S_{++}^{m}$ and $d\in \reals^{m\times 1}$. 
 Using the fact that the ellipsoid volume is proportional to $\det B$, the maximum volume ellisoid can be obtained by solving the following convex optimization problem \cite[Chapter 8]{boyd2004convex}
\begin{subequations}
\eqn{
\min_{B, d} \quad & \log \det B^{-1} \label{eq::ellipse_objective}\\
\textrm{s.t.} \quad & || B a_i || + a_i d \leq b_i, ~ \forall i\in \{1,2,..,N\}
}
\label{eq::ellipse_volume}
\vspace{-4mm}
\end{subequations}

In practice, Monte Carlo provides most accurate estimates of polytope volume in our simulation studies. The simple MC method in \eqref{eq::mc_volume} however is known to perform poorly in high-dimensional spaces. Since Monte Carlo estimates are prone to be noisy, this method is also not very suitable for evaluating gradients, which are needed to enforce CBF conditions. While more accurate but relatively expensive MC variations for volume computations\cite{volestiVolume} are known to exist, we leave the investigation of their suitability for control applications to future work.  The implementation with circle approximation is fastest since it only requires solving an LP however it suffers from non-uniqueness of solutions and therefore the LP solver may converge to different solutions for the same state configuration (We comment more on the practical implications of this in Section \ref{section::simulation_results}). Moreover, the circle may severely underapproximate the polytope volume. The ellipsoid approximates the volume much more accurately than the circle but is slower to implement. In Section \ref{section::simulation_results}, we observe that even with 10 constraints, the SDP can be solved fast enough for real-time implementations.
\vspace{-1mm}
\subsection{Smooth Approximation of Polytope Volume}
The polytope volume $\mathcal{V}(t,x)$ is not a differentiable function in general. An example configuration that is non-differentiable is shown in Fig. \ref{fig::fs_example_classK}. Therefore, $\mathcal{V}(t,x)$ should ideally be formulated as a non-smooth CBF\cite{glotfelter2017nonsmooth} and generalized gradients of the exact volume given by \eqref{eq::polytope_volume} as well as of the ellipse and circle approximations should be used. We defer such an extensive non-smooth analysis to future work. In this work, we utilize gradients of circle and ellipse approximations that are computed using existing tools, details of which are mentioned in Section \ref{section::simulation_results}. In the remainder of this section, we propose a smooth underapproximation of \eqref{eq::polytope_volume}. 



\begin{theorem}
    Consider the feasible space $\mathcal{U}_c(t,x)$ defined in \eqref{eq::cvx_polytope_eqn} and its volume $\mathcal{V}(t,x)=\textrm{vol}(\mathcal{U}_c(t,x))$. Let $H:\reals\rightarrow \{0,1\}$ be the Heaveside unit step function such that $H(y)=0$ for $y<0$ and $H(y)=1$ otherwise. Let $H_s$ be a differentiable approximation of $H$ such that $H_s(z) \leq H(x),  \forall z \in \reals$. Define $\mathcal{V}_s:\reals^+\times \reals^n\rightarrow\reals$ as the following differentiable function
    \eqn{
    \mathcal{V}_s(t,x) = \int\limits_{u\in \reals^m} \left[  \prod_{i=1}^N H_s(b_i(t,x) - A_i(t,x) u)  \right] d\mathcal{V}
    }
    Then $\mathcal{V}(t,x)\geq \mathcal{V}_s(t,x)$ for all $t\in \reals^+,x\in \reals^n$.
    \vspace{-3mm}
\end{theorem}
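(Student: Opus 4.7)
The plan is to express both $\mathcal{V}$ and $\mathcal{V}_s$ as integrals of the same form over all of $\reals^m$ and then use pointwise monotonicity of the integrand, together with monotonicity of the Lebesgue integral, to obtain the claimed inequality. This is essentially a dominated-comparison argument; the only substantive work is rewriting the polytope volume as a product of Heaviside terms and checking that the hypothesis on $H_s$ propagates through the product.

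First, I would rewrite $\mathcal{V}(t,x)$ as an integral over all of $\reals^m$ by absorbing the polytopic constraints into an indicator. Since $u \in \mathcal{U}_c(t,x)$ if and only if $b_i(t,x) - A_i(t,x) u \geq 0$ for every $i \in \{1,\ldots,N\}$, the indicator factors exactly as a product of Heavisides:
\begin{equation*}
\mathcal{V}(t,x) \;=\; \int_{\mathcal{U}_c(t,x)} d\mathcal{V} \;=\; \int_{u \in \reals^m} \prod_{i=1}^N H\bigl(b_i(t,x) - A_i(t,x) u\bigr)\, d\mathcal{V}.
\end{equation*}
This puts $\mathcal{V}$ and $\mathcal{V}_s$ on the same footing: identical integration domain and the same $N$-fold product structure, differing only by replacing each $H$ with $H_s$.

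Next I would establish the pointwise inequality of the integrands. The hypothesis $H_s(z) \leq H(z)$ together with $H \in \{0,1\}$ forces $H_s(z) \leq 0$ whenever $z < 0$. Reading the hypothesis as giving a bona fide approximation of $H$, one additionally has $H_s \geq 0$ (otherwise $\mathcal{V}_s$ would not even be a meaningful volume surrogate). Hence $0 \leq H_s(z) \leq H(z) \leq 1$ for every $z \in \reals$. Since multiplication is monotone on $[0,1]^N$, for any fixed $(t,x,u)$,
\begin{equation*}
\prod_{i=1}^N H_s\bigl(b_i(t,x) - A_i(t,x) u\bigr) \;\leq\; \prod_{i=1}^N H\bigl(b_i(t,x) - A_i(t,x) u\bigr).
\end{equation*}

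Finally, integrating this pointwise inequality with respect to $u$ over $\reals^m$ preserves the direction of the inequality, yielding $\mathcal{V}_s(t,x) \leq \mathcal{V}(t,x)$ for every $(t,x)$, which is the claim. The main subtlety, and the one place where care is needed, is the pointwise product-monotonicity step: the hypothesis as stated only controls each factor from above, so I would want to note explicitly that any reasonable smooth Heaviside underapproximation satisfies $H_s \geq 0$, since otherwise products of factors with mixed signs could in principle break monotonicity. A secondary, purely technical point is that $\mathcal{V}_s$ is well-defined as a (possibly Lebesgue) integral; this is automatic because $\mathcal{V}_s \leq \mathcal{V} < \infty$ whenever $\mathcal{U}_c$ is bounded, e.g., when input bounds are present as assumed in the construction of $A,b$ in \eqref{eq::cvx_polytope_eqn}.
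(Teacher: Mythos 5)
Your proof is correct and follows essentially the same route as the paper: rewrite $\mathcal{V}$ as $\int_{\reals^m}\prod_i H(b_i - A_i u)\,d\mathcal{V}$, compare integrands pointwise, and integrate. The one place you go beyond the paper is worth keeping: the paper's proof passes directly from $H_s \leq H$ to the product inequality, which is only valid if each factor is nonnegative (with mixed signs, a product of two negative $H_s$ values would exceed the product of zeros from $H$), so your explicit observation that the hypothesis must be strengthened to $0 \leq H_s \leq H$ fixes a genuine, if minor, gap in both the theorem statement and the paper's argument.
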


\begin{proof}
    For brevity, we skip the explicit dependency on time in the following. The volume of the polytope \eqref{eq::cvx_polytope_eqn} is given by
\begin{subequations}
    \eqn{
\mathcal{V}(x) &= \textrm{vol}(\mathcal{U}_c(x)) 
= \int\limits_{u\in \{u \in \reals^m| A(x) u \leq b(x) \}} d\mathcal{V} \\
&= \int\limits_{u\in \reals^m} \left[  \prod_{i=1}^N H(b_i(x) - A_i(x) u)  \right] d\mathcal{V} \label{eq::volume_as_step_function}
}    
\end{subequations}
where $d\mathcal{V}$ is the infinitesimal volume (for example, $d\mathcal{V}=du_x du_y$ when $m=2$) 
Note that although $H$ is a discontinuous function, the integral \eqref{eq::volume_as_step_function} exists as the points of discontinuity, defined by the boundary of the polytope, form a set of measure 0. Furthermore, since $H(y)\leq H_s(y)$ for all $y\in \reals$, we have $\mathcal{V}(t,x)\geq \mathcal{V}_s(t,x)$.
\end{proof}
Note that, although smooth, evaluating the integral \eqref{eq::volume_as_step_function} analytically is still non-trivial and requires further numerical techniques such as MC approximation. Due to sensitivity of gradient to the number of samples chosen, we observe that MC, although suitable for obtaining volume, is not suitable for gradient computation. Therefore for lack of space, we only present results with ellipse and circle approximations in the ensuing. Methods to provide stable gradients through \eqref{eq::volume_as_step_function} will be explored in future work.

\section{Simulation Results}
\label{section::simulation_results}
In this section, we present two case studies to evaluate the performance of the proposed volume barrier function. All the simulations are run on a laptop with i9-13905 CPU. The dynamics are propagated using Euler integration with a time step of 0.01s. We use Python 3.11 and JAX's autodiff feature to compute all required gradients except for the circle and ellipse approximations whose details are mentioned in the next section. Before proceeding with the results, we first mention some important implementation details of our framework.

The sphere and ellipsoid underapproximations of $\mathcal{V}(t,x)$ are computed using LP and SDP respectively, which are convex programs and, therefore, we can exploit sensitivity analysis \cite{agrawal2019differentiable} to compute the gradients of their volume w.r.t the polytope matrices $A,b$. Libraries like cvxpy \cite{diamond2016cvxpy} readily provide an interface to differentiate through convex programs and we use cvxpy's JAX interface in our simulations. While LP and SDPs are not always differentiable, we do not encounter observable issues in our simulations. We hypothesize that this is because the configurations that are non-differentiable, which are related to instances when a new hyperplane enters or exits the feasible space, are rarely encountered exactly in a discrete-time simulation. A more thorough analysis though is subject to future work. Note that, if any non-differentiable configuration is encountered, cvxpy returns heuristic gradients. 

In all our simulations, we first obtain the polytope volume $\mathcal{V}$ and its gradients using one of the proposed methods: smooth approximation or circle/ellipsoid approximation. We then use these values to formulate the CBF conditions and solve the controller \eqref{eq::fs_final_cbf_controller}. Therefore, in the case of circles and ellipsoids, we solve two optimization problems sequentially: first, an LP/SDP, and then the FS-CBF-QP.

\subsection{Case Study 1: Valid FS-CBF computation}

\label{section::case_study1}
In this case study, we demonstrate the concept of FS-CBF for navigating in the obstacle environment shown in Fig. \ref{fig::si_fs}. A Dubins car model is considered: $\dot p_x = v\cos\theta, \dot p_y = v\sin\theta, \dot\theta = u$ where $p_x,p_y$ are position, $\theta$ is heading, $v$ is fixed speed, and the angular velocity $u\in [-0.5,0.5]$ is the control input.
We consider the constraint functions, also called candidate CBFs, $c_i=d_i^2-d_{min}^2$, with $d_i$ being the distance to the center of $i^{th}$ obstacle and $d_{min}$ the radius of obstacle. 
Next, we find valid CBFs $h_i$ to each $c_i$ using an existing method called \textit{refineCBF} proposed in \cite{tonkens2022refining} that leverages Hamilton-Jacobi (HJ) reachability analysis for designing CBFs.
Note that each valid CBF $h_i$ does not enforce collision avoidance with an obstacle $j\neq i$. Now we present results for two cases: In Case 1, we start with valid CBFs $h_i$ and show their compatible state space in Fig. \ref{fig::refinedCBFs}(a). The feasible space volume (of one-dimensional control space in this example) is computed using 100 MC samples using method in Section~\ref{section::mc_volume}. 
We once again use refineCBF to find a valid CBF on this compatible state space. 
The resulting FS-CBF is visualized in Fig. \ref{fig::refinedCBFs}(b). In Case 2, we start with the candidate CBFs $c_i$ and visualize their compatible state space in Fig. \ref{fig::refinedCBFs}(c),(e) for two different choices of linear \classK function parameters. 
Note that $c_i$ are higher-order barriers and their feasible space is defined by \eqref{eq::feasible_cbf1} for $r_i=2$. 
Next, we find a valid CBF on compatible state space of $c_i$ using the refineCBF framework and visualize the resulting FS-CBF in Fig. \ref{fig::refinedCBFs}(d),(f). Observe that compatibility is highly sensitive to chosen \classK parameters. This presents a significant challenge in practice wherein manually tuned candidate CBFs are often employed as finding a valid CBF is not always feasible computationally.

Note that the unsafe region (function value of zero in Fig.\ref{fig::refinedCBFs}) is larger in Fig.\ref{fig::refinedCBFs})(b),(d),(f) compared to Fig.\ref{fig::refinedCBFs})(a),(c),(e). This implies that some of the states with non-zero feasible space volume are outside the found forward invariant set and are therefore unsafe. We thus reiterate that CBFs designed independently do not possess any guarantee of compatibility when enforced together. 


\begin{figure}
    \centering
    \includegraphics[width=0.44\textwidth]{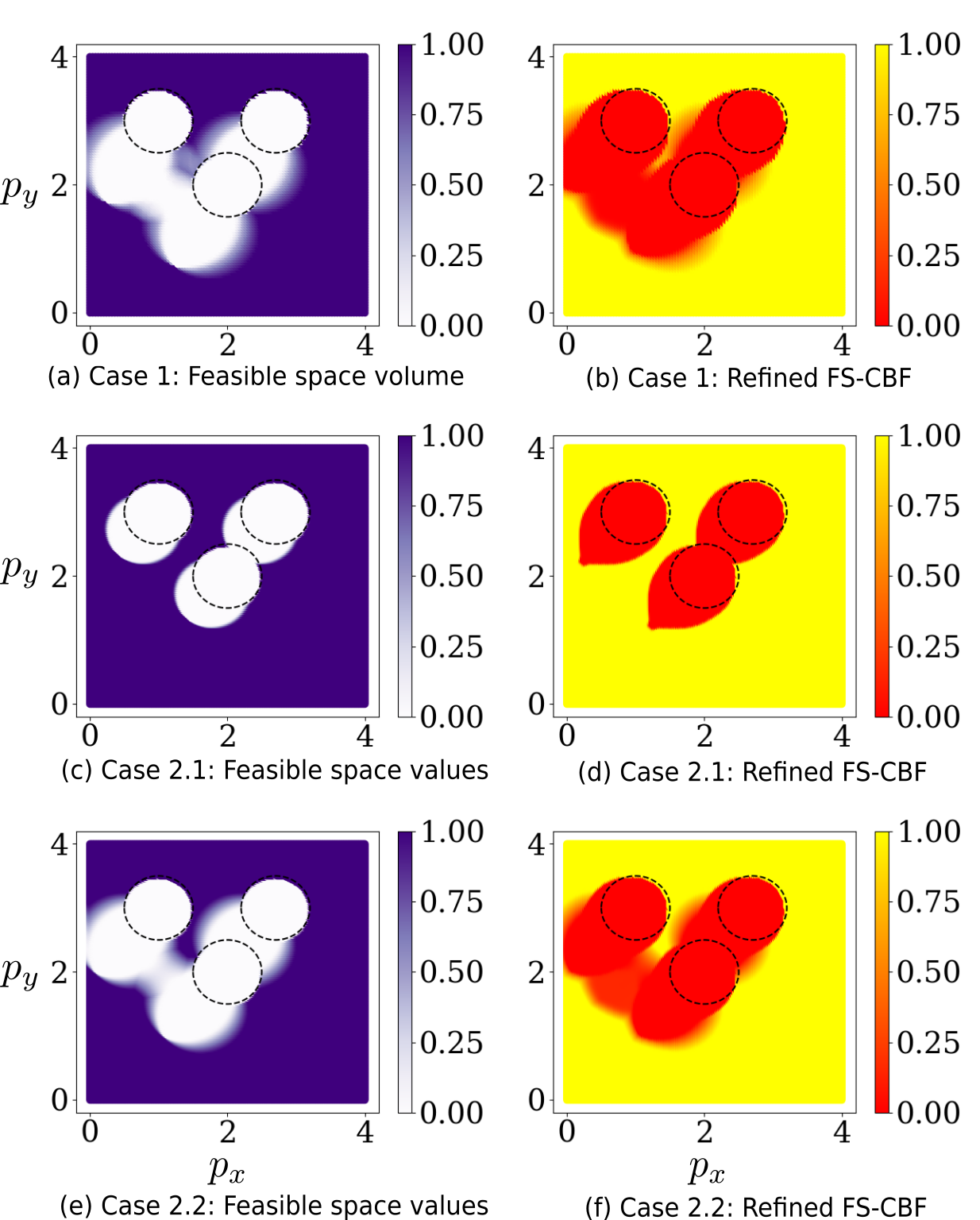}
    \caption{\small{Compatible space vs refined FS-CBF values at different $X,Y$ and heading $\theta = 45^0$. The color bar represents the numerical values of FS volume and FS-CBF clipped to the interval [0,1]. (a),(b) Case 1 - initiating with three \textit{valid} CBFs. (c)-(f) initiating with three \textit{candidate} CBFs with aggressive (Case 2) and conservative (Case 3) \classK function parameters for HOCBF. The compatible space (purple) is a subset of the forward invariant set (yellow) as also illustrated in Fig. \ref{fig::mainfig}  }}
    \label{fig::refinedCBFs}
    \vspace{-3mm}
\end{figure}

\subsection{Case Study 2: Improving Feasibility and Sensitivity}
\label{section::case_study_2}
\begin{figure}
    \centering
    \includegraphics[width=0.40\textwidth]{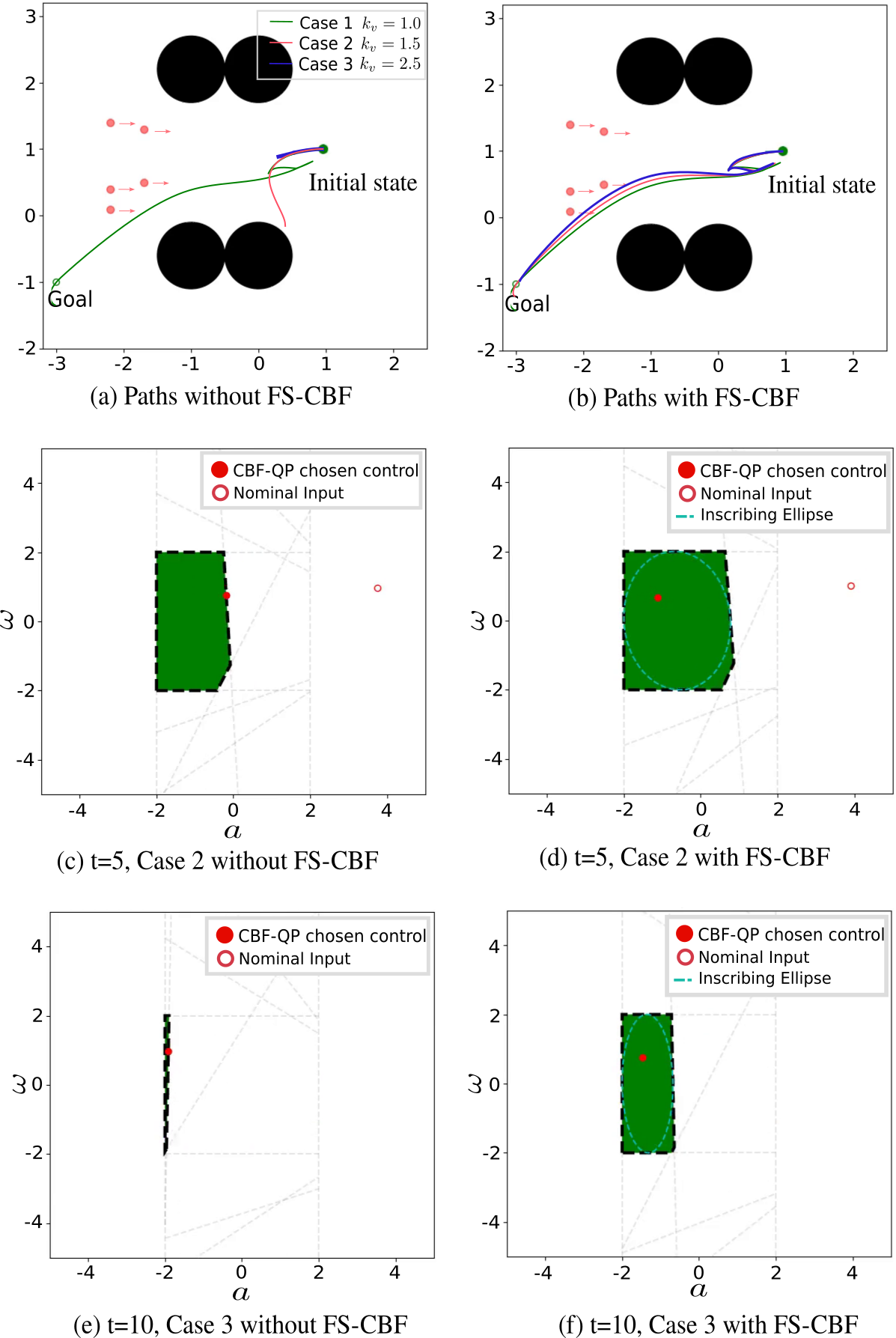}
    \caption{\small{
        Case study 1 - humans move towards the right. (a),(b) Paths for the three cases of $k_v=1,1.5,2.5$. Blue and red paths in (a) end when CBF-QP becomes infeasible. (c), (d), (e), (f) Feasible space visualization with and without volume barrier function for $k_v=1.5$ in (c), (d) and $k_v=2.5$ in (e), (f). Note when nominal input is outside the feasible space, the volume barrier function maps it to the interior of feasible space whereas CBF-QP \eqref{eq::qp_controller} maps it to a point on the boundary.
        }}
    \label{fig::cs1_scenario}
    \vspace{-4mm}
\end{figure}
Consider the scenario shown in Fig. \ref{fig::cs1_scenario} for a robot navigating to its goal location $g$ while avoiding black prohibited zones and performing collision avoidance with humans. 
This case study serves two purposes. First, we show that the addition of a feasible space volume barrier function enhances compatibility of candidate CBFs. 
Second, it also helps reduce sensitivity to the hyperparameters of the controller. 
The robot is modeled as a dynamic unicycle with position $p=[p_x,p_y]$, velocity $v$, heading angle $\psi$ and the following dynamics $\dot{p}_x = v \cos\psi,~ \dot{p}_y = v \sin\psi,~ \dot v = a, ~\dot \psi = \omega$. 
The control inputs are the linear acceleration $a$ and angular velocity $\omega$. 
The humans move with social force model \cite{helbing1995social}\cite{socialforce}, and consider the robot as one of the social agents. We consider the following reference (nominal) controller for the extended unicycle dynamics
\eqn{
    \psi_{ref} &= \arctan(\frac{g_y-p_y}{g_x-p_x}), ~ \psi_e = \psi - \psi_{ref}, ~ \omega_{ref} = k_\omega \psi_e \nonumber  \\
    v_{ref} & = k_x || p - g || \cos \psi_e, ~ a_{ref} = k_v (v - v_{ref}) 
}
where $k_\omega, k_p, k_v>0$ are user-chosen gains. The three candidate CBFs are designed in similar fashion as in Section \ref{section::case_study1}. Fig. \ref{fig::cs1_scenario} shows compares the performance of CBF-QP \eqref{eq::qp_controller} and FS-CBF-QP \eqref{eq::fs_final_cbf_controller} for three different values of the gain $k_v$. 
Fig. \ref{fig::cs1_scenario}(a)-(b) show the trajectories of the robot, and Figs. \ref{fig::cs1_scenario}(c)-\ref{fig::cs1_scenario}(e) show the variation of polytope volume under three different values of the gain $k_v$. We observe that when $k_v$ is increased, that is, the nominal controller is more aggressive, the CBF-QP becomes infeasible sooner. 
FS-CBF-QP on the other hand can maintain feasibility and the robot successfully reaches the goal. as seen in Figs. \ref{fig::cs1_scenario}(d), \ref{fig::cs1_scenario}(f). The visualization in Fig. \ref{fig::cs1_scenario} also shows that the nominal control input was not mapped to the boundary of the feasible set, but to its interior as mentioned in Remark \ref{remark::gauge_map}. The video is available on \href{https://www.youtube.com/playlist?list=PLJxod9x5m_V2R5c4MLCCwwe_xliJgzOr2}{\scriptsize{https://www.youtube.com/playlist?list=PLJxod9x5m\_V2R5c4MLCCwwe\_xliJgzOr2}}.

The importance of our controller thus lies in enhancing the performance of a given CBF-QP controller. To further corroborate our claim, we perform extensive Monte Carlo simulations by randomizing the robot's initial state $p_x,p_y,v$ and choice of gains $k_x, k_v$. We report the average simulation run times in Table \ref{table::case_study_stats}. The run time is defined as the minimum of 7 seconds (maximum duration of the simulation) and the time at which CBF-QP ($T_1$) or FS-CBF-QP ($T_2$, reported for different $\alpha_v$s) become infeasible. We present results for small as well as large perturbarion in the initial state and gains $k_x,k_v$. In both the cases, we observe that FS-CBF-QP significantly outperforms CBF-QP. Note that infeasibility is not unexpected even for FS-CBF-QP as we only employ candidate CBFs and assume, for sake of implementation, that $\mathcal{V}_c=\mathcal{V}$. Since $\mathcal{V}$ is not a valid CBF, our claim is strictly limited to empirical demonstration of enhanced feasibility of QP and reduced sensitivity to user parameters. Designing a CBF for such a complicated scenario with multiple obstacle and heterogenerous agents is an active area of research and no generic solutions exist yet to the best of our knowledge.

\begin{table}[h]
\setlength{\tabcolsep}{2.0pt}
\centering
\begin{tabular}{ c ||c  c c  c  c  c  c  c } 
 \hline
 & $T_1$ & $T_2$ & $\alpha_\mathcal{V}$ & $p_x$ & $p_y$ & $v$ & $k_p$ & $k_v$ \\ 
 \hline
 \hline 
\multirow{3}{*}{Scene 1} & \multirow{3}{*}{1.67} & 4.07  & 0.8 & \multirow{3}{*}{[0.9,1.1]} & \multirow{3}{*}{[0.9,1.1]} & \multirow{3}{*}{[1.2,1.4]} & \multirow{3}{*}{[0.9,1.1]} & \multirow{3}{*}{[1.5,1.6]} \\ 
      & & 4.37 &  1.0 \\
      & & 3.53 &  2.0 \\
      \hline
\multirow{3}{*}{Scene 2} & \multirow{3}{*}{1.29} & 3.04   & 0.8 & \multirow{3}{*}{[0.9,1.1]} & \multirow{3}{*}{[0.9,1.1]} & \multirow{3}{*}{[1.2,1.4]} & \multirow{3}{*}{[0.9,1.1]} & \multirow{3}{*}{[2.5,2.6]} \\ 
      & & 2.13 & 1.0  \\
      & & 1.86 & 2.0  \\
      \hline
 \multirow{3}{*}{Scene 3} & \multirow{3}{*}{1.64} & 4.74  & 0.8 & \multirow{3}{*}{[0.8,1.2]} & \multirow{3}{*}{[0.0,1.5]} & \multirow{3}{*}{[1.0,1.5]} & \multirow{3}{*}{[0.5,3.0]} & \multirow{3}{*}{[0.5,4.0]} \\ 
      & & 4.47  & 1.0  \\
      & & 3.65  & 2.0  \\
      \hline
 \end{tabular}
 \caption{\small{Sensitivity analysis of time to incompatibility for small (Scene 1,2) and large (Scene 3) perturbations of the initial state $p_x,p_y,v$ and gains $k_p, k_v$. Variables are sampled with uniform distribution within the specified [lower bound, upper bound].}}
 \label{table::case_study_stats}
 \vspace{-2mm}
 \end{table}

\subsection{Case Study 3: AWS Hospital}
\label{section::case_study_3}
Next, we evaluate our FS-CBF for navigating turtlebot3 robot in AWS Hospital Gazebo environment shown in Fig.~\ref{fig::experiment_scenario}. We first generate a map of the environment without humans using ROS2 navigation stack. This map is used by global planner to plan paths, as well as by the CBF-QP controller to perform collision avoidance with static obstacles. At each time, the robot finds the closest occupied grid cell in the directions $\theta = \beta 30^0, \beta\in\{1,2,..,12\}$, and formulates 12 CBF constraints correspondingly. The humans are moved in real-time using a modified version of social force model plugin \cite{gazebosfm}. Humans, however, are agnostic to robot's motion and do not contribute to collision avoidance.
The position and velocity of humans is extracted from Gazebo and made available to the robot, which then adds a CBF constraint for each human. 
The collision avoidance is enforced using distance as the higher-order barrier function of order two which is known to work well in the presence of a single constraint only. We use linear \classK functions $\alpha_1(h)=2h, \alpha_2(h)=6h$ for first and second-order HOCBF derivative conditions. 
The proposed FS CBF-QP is compared with CBF-QP in \eqref{eq::qp_controller}. 
We observe that CBF-QP leads to fewer collisions with humans by preventing the robot from getting too close to the humans in congested scenarios. A video of the experiment can be found at \href{https://www.youtube.com/playlist?list=PLJxod9x5m_V2R5c4MLCCwwe_xliJgzOr2}{\scriptsize{https://www.youtube.com/playlist?list=PLJxod9x5m\_V2R5c4MLCCwwe\_xliJgzOr2}}.


\section{Conclusion}
We propose a new barrier function that restricts the rate-of-change of the volume of feasible solution space of CBF-QP. We also provide empirical evaluation to corroborate our theoretical results, and to show effectiveness in improving feasibility and in reducing sensitivity to changes in other system modules such as nominal controller. Future work will involve performing experiments with robots and non-smooth control theoretic analysis of FS-CBF. Finally, monitoring feasible space volume may also help alleviate feasibility issues in other optimization based controllers such as MPC. This analysis will also be performed in future work.

\bibliographystyle{IEEEtran}
\bibliography{references.bib}

\end{document}